\newtheorem{proposition}{Proposition}
\definecolor{citegreen}{rgb}{0,0.6,0}  
\begin{document}
\normalsize
\title{Online Conformal Probabilistic Numerics via\\[0.3cm]
Adaptive Edge-Cloud Offloading}

\author{
Qiushuo Hou\thanks{Zhejiang University}\\
\textit{qshou@zju.edu.cn} \\
\and
Sangwoo Park\thanks{Imperial College London}$^{\dagger\ddagger}$ \\
\textit{s.park@imperial.ac.uk} \\
\and
Matteo Zecchin\thanks{King’s College London} \\
\textit{matteo.1.zecchin@kcl.ac.uk}
\and
Yunlong Cai$^{*}$ \\
\textit{ylcai@zju.edu.cn} \\
\and
Guanding Yu$^{*}$ \\
\textit{yuguanding@zju.edu.cn} \\
\and
Osvaldo Simeone$^{\ddagger}$\\
\textit{osvaldo.simeone@kcl.ac.uk}
}

\date{}
\maketitle
\begin{abstract}
Consider an edge computing setting in which a user submits queries for the solution of a linear system to an edge processor, which is subject to time-varying computing availability. The edge processor applies a probabilistic linear solver (PLS) so as to be able to respond to the user's query within the allotted time and computing budget. Feedback to the user is in the form of a set of plausible solutions. Due to model misspecification, the highest-probability-density (HPD) set obtained via a direct application of PLS does not come with coverage guarantees with respect to the true solution of the linear system. This work introduces a new method to calibrate the HPD sets produced by PLS with the aim of  guaranteeing long-term coverage requirements. The proposed method, referred to as online conformal prediction-PLS (OCP-PLS), assumes sporadic feedback from cloud to edge. This enables the online calibration of uncertainty thresholds via online conformal prediction (OCP), an online optimization method previously studied in the context of prediction models. The validity of OCP-PLS is verified via experiments that bring insights into trade-offs between coverage, prediction set size, and cloud usage.
\end{abstract}

\section{Introduction}
In modern hierarchical computing architectures encompassing edge and cloud servers, the efficient management of computational resources is critical to balance performance with latency and communication overhead.  Edge servers process data near the user, providing low-latency, but possibly inaccurate, responses due to limited computational resources. In contrast, cloud servers offer more powerful processing capabilities, but at the cost of introducing communication delays. As an example, in industrial Internet-of-Things (IoT) settings, sensor data processing may leverage both real-time decision-making at the edge and more complex data analytics in the cloud, while accounting for the resulting inherent trade-offs between response speed and accuracy \citep{hong2019multi}.

Linear systems serve as the cornerstone of virtually all numerical computation. These systems -- in the form of the equation $Ax=b$ with unknown vector $x$ -- arise in contexts including convex optimization \citep{boyd2004convex}, Kalman filtering for state estimation \citep{Kalman_application}, and finite element analysis for computational fluid dynamics \citep{PDE_application}. A timely and efficient solution to such systems is often critical for real-time decision-making.

As illustrated in \cref{fig:overall}, we consider an edge computing scenario in which, at each round $t=1,2,...$, a user submits a query for the solution of a linear system $A_t x_t =b_t$ to an edge processor, which is subject to time-varying computing power availability.  The user seeks to obtain timely information about the solution $x_t^*=A_t^{-1}b_t$. However, a direct evaluation of the solution requires cubic computational complexity with respect to the matrix dimension, which 
 may be infeasible within the given latency and computing budgets    \citep{wenger2020probabilistic,cockayne2019bayesian}. 

Iterative solvers provide an ideal solution to adapt to the available computing budget, as they refine the solution sequentially until the computing budget runs out. \emph{Probabilistic numerics}  (PN) \citep{larkin1972gaussian} treats numerical problems, such as solving linear systems,  as a form of statistical inference. This provides a natural way to quantify the uncertainty associated with iterative numerical solutions obtained under limited computing power.  

Specifically, the \emph{probabilistic linear solver} (PLS) \citep{cockayne2019bayesian,hennig2015probabilistic,bartels2019probabilistic, reid2020bayescg, pfortner2024computation} typically treat the solution of a linear system as a Bayesian inference problem, whereby each iteration provides an updated posterior distribution for the solution $x^*_t$. The posterior distribution reflects the current knowledge of the true solution given the available computing budget. Using this posterior distribution, PLS can construct a \emph{highest-probability-density (HPD) set} $\mathcal{C}_t$ with the aim of covering the true solution $x^*_t$ with a pre-determined target probability level.

In the considered system illustrated in Fig. 1, at each round $t$, the edge processor applies PLS so as to be able to respond to the user's query within the allotted time and computing budgets. Feedback to the user is in the form of a HPD set $\mathcal{C}_t$. However, in the presence of \emph{model misspecification}, the HPD sets obtained using a direct application of PLS do not come with coverage guarantees with respect to the true solution $x^*_t$ of the linear system \citep{cockayne2019bayesian,bartels2019probabilistic,wenger2020probabilistic,hennig2015probabilistic}.

\begin{figure}[t]
    \centering
    \includegraphics[width=160pt]{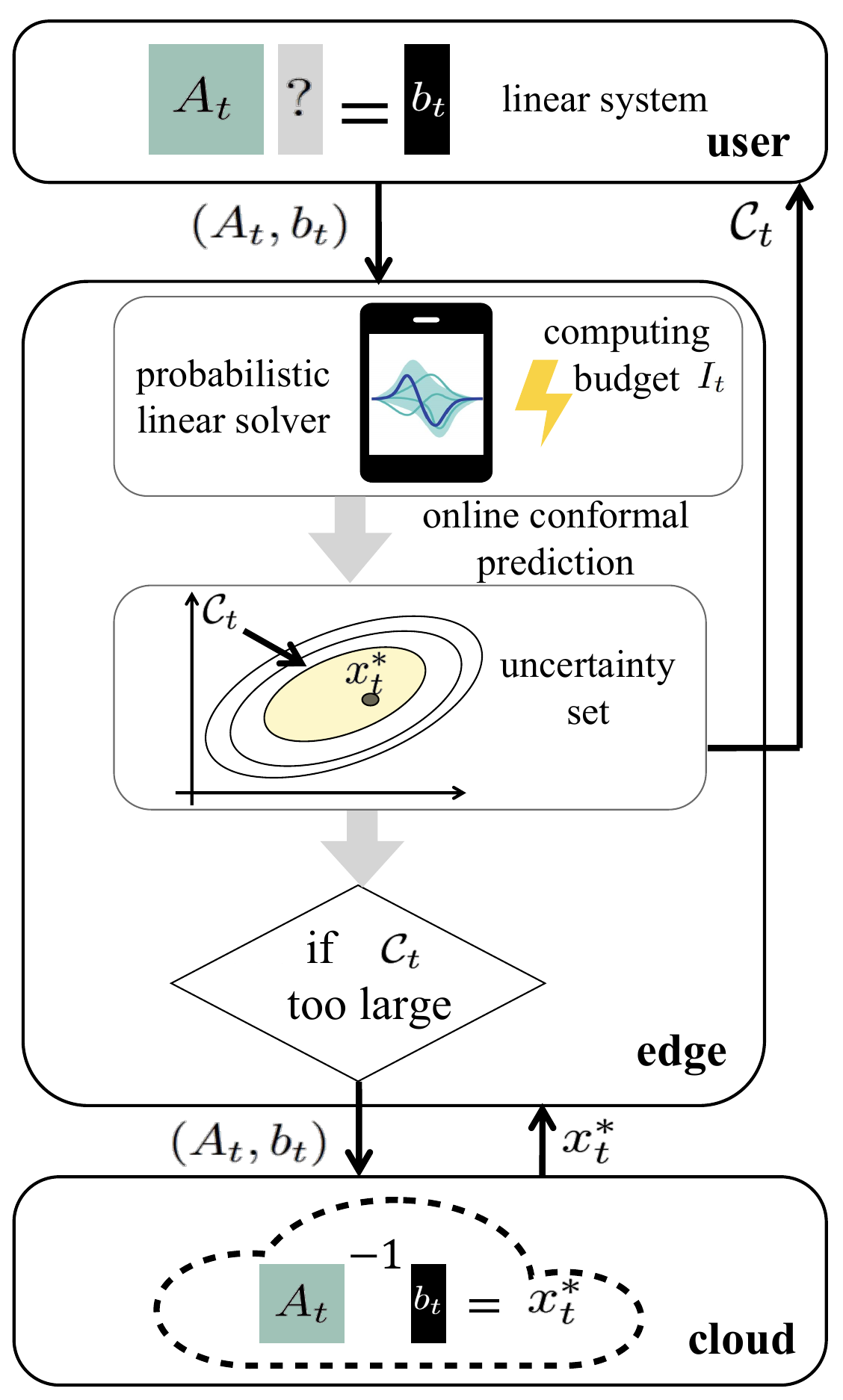}
    \caption{At each round $t$, a user submits a linear system defined by the pair $(A_t, b_t)$ to an edge device. Given the available computing budget $I_t$, the edge device employs a probabilistic linear solver (PLS), obtaining a highest-probability-density (HPD) set $\mathcal{C}_t$ for the true solution $x^*_t = A_t^{-1}b_t$. The set $\mathcal{C}_t$ is returned in a timely fashion to the user. The proposed method, OCP-PLS ensures long-term coverage guarantees (\ref{eq:coverage_guarantee}) for the HPD sets $\mathcal{C}_t$, addressing model misspecification in PLS. To this end, OCP-PLS allows for sporadic communication between cloud and edge.}
    \label{fig:overall}
\end{figure}

Model misspecification arises from the choice of a distribution that does not reflect the properties of the true solution $x^*_t$. For example, one typically assumes a Gaussian prior, but the solution $x^*_t$ may be sparse. Misspecification may also arise due to simplifications done in the evaluation of the likelihood in order to obtain Gaussian posteriors, as is the case with BayesCG \citep{hegde2024calibrated,reid2020bayescg, wenger2020probabilistic,cockayne2019bayesian,cockayne2022testing,cockayne2019bayesian_PN}, which may reinforce incorrect prior assumptions. Further work on the impact of calibration on PLS includes reference \citep{reid2020bayescg}, which demonstrated that BayesCG search directions yield a slightly optimistic HPD set under the Krylov prior, and references \citep{hegde2024calibrated, cockayne2021probabilistic}, which explored the use of probabilistic stationary iterative methods rather than Bayesian framework.

This work introduces a new method to calibrate the HPD sets produced by PLS with the aim of guaranteeing long-term coverage requirements. In practice, we wish to ensure that, on average over time, the HPD set $\mathcal{C}_t$ returned by the edge to the user contains the true solution $x^*_t$ with a user-defined \emph{coverage} rate $1-\alpha$.  

To ensure this condition, we assume that, \emph{sporadically}, the edge processor can submit the current linear system defined by the pair $(A_t,b_t)$ to a cloud processor. This offloading to the cloud is done after responding to a user's query in order not to affect the latency experienced by the user. When submitting the job to the cloud, the edge processor receives the true solution $x^*_t$. This information can be used to monitor the current coverage rate, making it possible to calibrate the prediction sets $\mathcal{C}_t$ towards meeting the required target rate $1-\alpha$. 

The proposed method, referred to as \emph{online conformal prediction-PLS} (OCP-PLS), integrates for the first time PLS with \emph{online conformal prediction} (OCP),  an online optimization method previously studied in the context of predictive models \citep{gibbs2021adaptive, angelopoulos2024online}. The theoretical validity of OCP-PLS is verified via experiments that bring insights into trade-offs between coverage, prediction set size, and cloud usage.

The structure of the paper is as follows. \cref{sec:setting_prob} introduces the problem of conformal PLS. The necessary background on OCP is reviewed in \cref{sec:background}. \cref{sec:method} presents the proposed approaches for constructing OCP-PLS. Simulation results are summarized in \cref{sec:simulation}. Finally, \cref{sec:conclusion} concludes the paper.

\section{Setting and Problem Formulation}\label{sec:setting_prob}
In this section, we define the problem studied in this paper.

\subsection{Setting and Problem Definition}
As illustrated in \cref{fig:overall}, we study a hierarchical computing architecture encompassing edge and cloud processors. At each round $t$, a user submits a linear system problem defined by the pair $(A_t, b_t)$, consisting of an $n_t\times n_t$ Hermitian matrix $A_t$ and an $n_t\times 1$ vector $b_t$, to the edge device. The user is interested in obtaining information about the solution 
\begin{equation}\label{eq:linear_system}
    x^*_t = A_t^{-1} b_t.
\end{equation}
This solution is supposed to exist and to be unique, requiring matrix $A_t$ to be invertible. Problems of this type are common in a wide variety of applications, such as convex optimization using Newton's method requiring Hessian matrix inversion \citep{boyd2004convex}, Kalman filtering for state estimation \citep{Kalman_application}, and partial differential equations in computational fluid dynamics via Galerkin’s method \citep{PDE_application}.

\begin{figure}[t]
    \includegraphics[width=280pt]{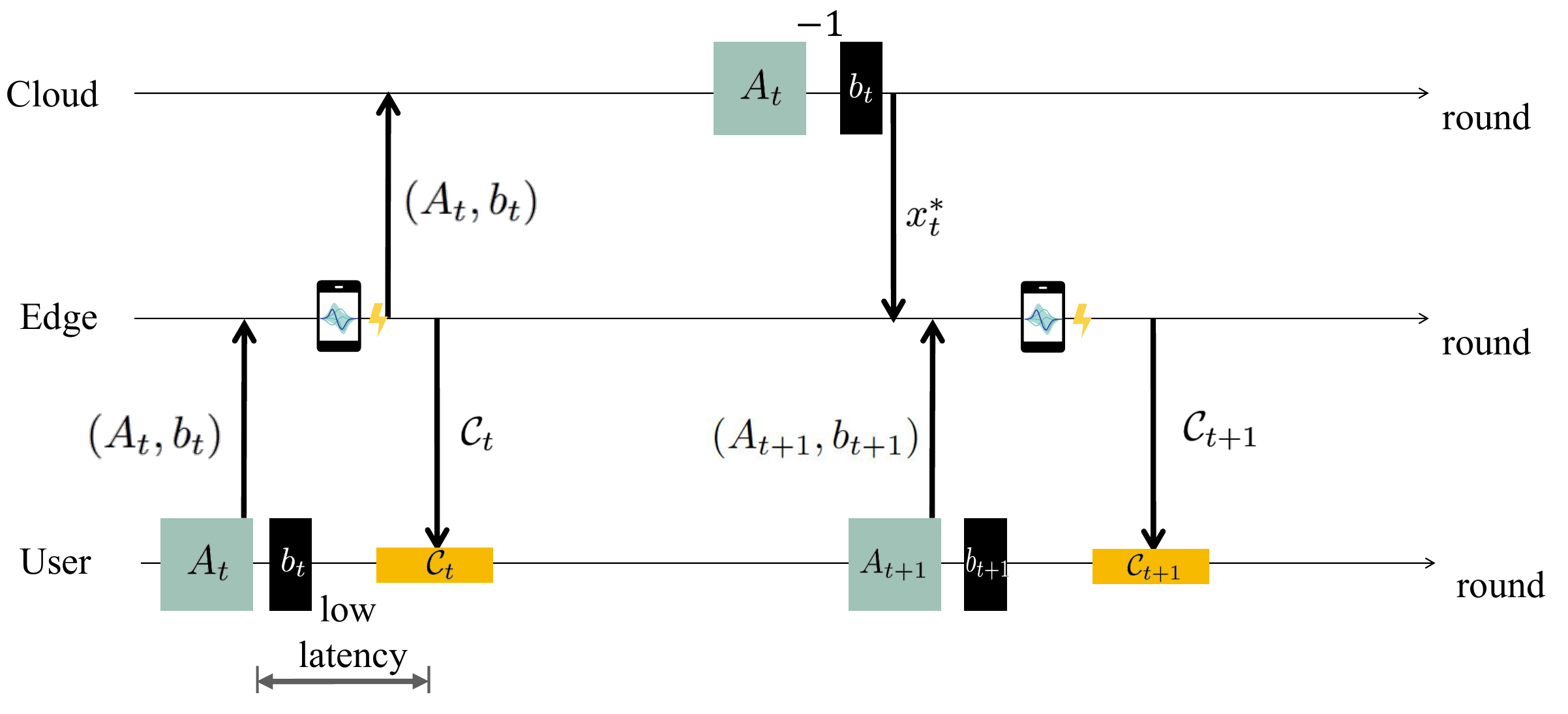}
    \centering
    \caption{Timeline of messages exchanged across cloud, edge, and user.}
    \label{fig:time_step}
\end{figure}

At each round $t$, the edge has available a computing budget $I_t$. Accounting for this limited budget, the edge device employs PLS to obtain an HPD set $\mathcal{C}_t$ for the true solution (\ref{eq:linear_system}) \citep{cockayne2019bayesian,bartels2019probabilistic,wenger2020probabilistic}. This set is returned to the user in response to its query. Thanks to processing at the edge, the user's requests experience low latency (see \cref{fig:time_step}).

This paper addresses the problem of \emph{calibrating} the prediction set $\mathcal{C}_t$ so as to ensure that the sequence of HPD sets $\mathcal{C}_t$ returned to the user satisfies the following long-term coverage condition
\begin{equation}\label{eq:coverage_guarantee}
    \left|\frac{1}{T}\sum_{t=1}^T \Pr\{x_t^* \in \mathcal{C}_{t}\} -(1-\alpha)\right| = o(1),
\end{equation}
where $\alpha \in [0, 1]$ is a pre-determined target miscoverage level, and $o(1)$ denotes a quantity that tends to zero as $T\rightarrow \infty$. The probability (\ref{eq:coverage_guarantee}) is evaluated over the randomness associated with edge-cloud communication, as it will be described in \cref{sec:method}. By (\ref{eq:coverage_guarantee}), a fraction approximately equal to $1-\alpha$ of the prediction sets contains the true solution $x^*_t$.

As shown in \cref{fig:time_step}, after responding to the user, the edge processor analyzes the prediction set $\mathcal{C}_t$, and decides whether to submit the problem $(A_t,b_t)$ for validation to the cloud. If the edge processor communicates the pair $(A_t, b_t)$ to the cloud, the cloud evaluates the true solution $x^*_t$ and returns it to the edge. This allows the edge to monitor the condition (\ref{eq:coverage_guarantee}), and to calibrate future sets $C_{t'}$ with $t'>t$. However, cloud-based validation is expensive, and the edge aims at keeping the number of problems submitted to the cloud as low as possible. As shown in \cref{fig:time_step}, we will assume first that feedback from the cloud, if requested, is received prior to the next query $(A_{t+1},b_{t+1})$ by the user. This assumption will be alleviated in the appendix.

\subsection{Probabilistic Linear Solver}\label{sub:PLS}
As previously mentioned, the edge processor addresses the problem $(A_t, b_t)$ within the computing budget $I_t$ using PLS. Accordingly, PLS treats the solution of a linear system as a Bayesian inference problem. This enables the quantification of the uncertainty associated with numerical solutions obtained within a limited computing budget \citep{cockayne2019bayesian,bartels2019probabilistic,wenger2020probabilistic}. 

Given a budget of $I_t$ iterations, PLS operates sequentially by considering at each iteration $i=1,\ldots, I_t$ a \emph{search direction} $s_i$, yielding an effective observation $y_i = s_i^Tb$. Given a Gaussian prior distribution $x^*\sim\mathcal{N}(\mu_{0}, \Sigma_{0})$ on the solution $x^* = A_t^{-1}b_t$ with a full-rank diagonal matrix $\Sigma_0$, PLS returns the Gaussian posterior
\begin{equation}\label{eq:posterio}
    p(x_t|y_1,\dots,y_{I_t}; A_t,b_t) = \mathcal{N}(x_t|\mu_t,\Sigma_t)
\end{equation}
obtained from the observations $y_1,\dots,y_{I_t}$. In (\ref{eq:posterio}), the $d\times 1$ mean vector $\mu_{t}$ and the $d \times d$ covariance matrix $\Sigma_{t}$ are evaluated as \citep{cockayne2019bayesian,bartels2019probabilistic}
\begin{equation}\label{eq:mean and variance}
    \begin{aligned}
\mu_{t} &= \mu_{0} + \Sigma_{0} A_tS_{t}(S_{t}^{\top}A_t\Sigma_{0} A_tS_{t})^{-1}S_{t}^{\top}(b - A_t \mu_0),\\
\Sigma_{t} &= \Sigma_{0} - \Sigma_{0} A_tS_{t}(S_t^{\top}A_t\Sigma_{0} A_tS_{t})^{-1}S_{t}^{\top}A_t\Sigma_{0},
\end{aligned}
\end{equation}
with the $d\times I_t$ matrix 
\begin{equation}\label{eq:search_direction}
    S_{t}=[s_1, ..., s_{I_t}]
\end{equation}
collecting the $I_t$ search directions. Note that the covariance matrix $\Sigma_t$ has rank
\begin{equation}
    r_t \leq n_t-I_t,
\end{equation}
with equality achieved if the directions $S_t$ are suitably chosen \citep{wenger2022posterior}. Note also that, due to the propagation of floating point error, the rank $r_t$ may be practically larger than $n_t-I_t$. This may be the case, e.g., when using BayesCG as the search directions and the number of iterations $I_t$ is close to the dimension $n_t$ \citep{cockayne2019bayesian}.

Given the posterior distribution (\ref{eq:posterio}), an $(1-\alpha)$-HPD set for the true solution $x^*_t$ can be obtained as
\begin{align} \label{eq:credible}
    \mathcal{C}_t = &\mathop{\rm{argmin}}\limits_{\mathcal{C}\subseteq \mathbb{R}^d}\ |\mathcal{C}| \nonumber\\&\text{ s.t. } \int_{x_t \in \mathcal{C}} p(x_t|A_t,b_t) \mathrm{d}x_t \geq 1-\alpha,
\end{align}
where we wrote $p(x_t|A_t, b_t)$ in lieu of $p(x_t|y_1,\dots,y_{I_t}; A_t,b_t)$ to simplify the notation. 

The HPD set (\ref{eq:credible}) covers the true solution $x^*_t$ in (\ref{eq:linear_system}) with probability no smaller than $1-\alpha$, i.e.,
\begin{align}\label{eq:calibration}
    \Pr[x^*_t \in \mathcal{C}_t]\geq 1-\alpha,
\end{align}
only if the posterior distribution $p(x_t|A_t,b_t)$ accurately quantifies the uncertainty associated with estimating the solution $x^*_t$ given the observation $y_1,\dots,y_{I_t}$. Accordingly, the posterior distribution $p(x_t|A_t,b_t)$ is regarded as \emph{well calibrated} when the coverage probability in (\ref{eq:calibration}) equals the target level $1-\alpha$.
 
However, the PLS posterior (\ref{eq:posterio}) is known to be generally poorly calibrated because of model misspecification \citep{cockayne2019bayesian,bartels2019probabilistic,wenger2020probabilistic,hennig2015probabilistic}. Therefore, there is a need to develop model selection to satisfy the coverage requirement (\ref{eq:coverage_guarantee}). This is the main objective of this work.

\section{Background}\label{sec:background}
This section reviews the necessary background on OCP \citep{gibbs2021adaptive, zhao2024conformalized}.

\subsection{Online Conformal Prediction}\label{subsec:OCP}

Given an arbitrary sequence of input-output pairs $(x_t,y_t)\in\mathcal{X}\times \mathcal{Y}$, for $t = 1, 2, \ldots$, OCP aims to construct a prediction set $\mathcal{C}_t(x_t)$ on the output space $\mathcal{Y}$ that satisfies long-term coverage guarantees. OCP builds on a scoring function $s: \mathcal{X}\times \mathcal{Y} \rightarrow \mathbb{R}$ that measures the extent to which the output $y$ is mismatched to the input $x$. A smaller value $s(x,y)$ indicates that $y$ is a better fit for input $x$. The set $\mathcal{C}_t(x_t)$ is obtained by choosing all the candidate output values $y \in \mathcal{Y}$ with sufficiently small scores $s(x,y)$, i.e.,
\begin{align} \label{eq:ocp_set}
    \mathcal{C}_t(x_t) = \{ y \in \mathcal{Y}: s(x_t, y) \leq \lambda_t \},
\end{align}
where $\lambda_t$ is a threshold.

OCP updates the threshold $\lambda_t$ in (\ref{eq:ocp_set}) based on the input-output pairs $(x_1,y_1),\ldots,(x_{t-1}, y_{t-1})$ with the aim of ensuring the long-term coverage condition 
\begin{align} \label{eq:basic_ocp}
    \left|\frac{1}{T} \sum_{t=1}^T \mathds{1}\{y_t \in \mathcal{C}_t(x_t)\}-(1-\alpha)\right| \leq\frac{C}{T},
\end{align}
where $C$ is a constant independent of $T$. This result is obtained by updating the threshold as
\begin{align} \label{eq:update_rule_ocp}
    \lambda_{t+1} = \lambda_{t} + \gamma\big(\mathds{1}\{y_{t} \not\in \mathcal{C}_{t}(x_{t})\} - \alpha\big),
\end{align}
where $\gamma>0$ is a step size. This update rule intuitively increases the threshold when the prediction set $\mathcal{C}_{t}(x_t)$ fails to cover the true outcome $y_t$, making future sets more conservative, while decreasing it when coverage is achieved, making future sets more efficient. 

The validity of the condition (\ref{eq:basic_ocp}) for the prediction sets (\ref{eq:ocp_set}) obtained via the OCP update rule (\ref{eq:update_rule_ocp}) were shown in \cite{gibbs2021adaptive} using a telescoping argument, obtaining
\begin{equation}\label{eq:constant}
    C = \max \frac{|\lambda_{T+1}-\lambda_1|}{\gamma}.
\end{equation}
If the score function $s(x,y)$ is bounded within the interval $[m,M]$  with $M> m$ for any $(x,y)\in\mathcal{X}\times\mathcal{Y}$, then the threshold $\lambda_t$ can be also proved to be bounded as $\lambda_t \in [m-\gamma \alpha, M+\gamma(1-\alpha)]$, where $\lambda_1$ is selected within the interval $[m,M]$ \citep{gibbs2021adaptive,feldman2022achieving}. Accordingly, the quantity (\ref{eq:constant}) is given as

\begin{equation}
    C= \frac{M-m+\gamma}{\gamma},
\end{equation}
which does not depend on $T$.

\subsection{Intermittent Online Conformal Prediction}\label{sub:I-OCP}
The OCP update (\ref{eq:update_rule_ocp}) requires \emph{dense} feedback, as it assumes the availability of the true output $y_t$ at every time step $t=1,2,...$ in (\ref{eq:update_rule_ocp}). Intermittent OCP (I-OCP) \citep{zhao2024conformalized} alleviates this assumption by leveraging inverse probability weighting, also known as the Horvitz--Thompson estimator, which is used in statistical inference with missing data \citep{IPW}.

Specifically, I-OCP assumes that the output $y_t$ is available with probability $p_t$. The availability of the output is thus described by a Bernoulli random variable $\text{obs}_t\sim \text{Bern}(p_t)$, which indicates the availability of the output value $y_t$ if $\text{obs}_t=1$, and its absence when $\text{obs}_t=0$. The variables $\text{obs}_t$ are independent over the time index $t$. The update rule applied by I-OCP is given by 
\begin{align} \label{eq:update_rule_iocp}
    \lambda_{t+1} = \lambda_{t} + \frac{\gamma}{p_{t}}\big(\mathds{1}\{y_{t} \not\in \mathcal{C}_{t}(x_{t})\} - \alpha\big)\cdot \text{obs}_{t}.
\end{align}
Thus, I-OCP amplifies the update of the threshold to an extent that is inversely
proportional to the availability probability $p_t$. Accordingly, when feedback is less likely, i.e., when $p_t$ is smaller, the effective step size is increased to compensate for possible missing data.

It can be proved that I-OCP satisfies the long-term expected coverage property \citep{zhao2024conformalized}
\begin{align} \label{eq:guarantee_iocp}
    \left|\frac{1}{T} \sum_{t=1}^T \Pr\big[y_t \in \mathcal{C}_t(x_t)\big]-(1-\alpha) \right|\leq\frac{C}{T}
\end{align}
with $C$ in (\ref{eq:constant}), where the probability is evaluated over the randomness of the Bernoulli random variables $\text{obs}_1,\ldots, \text{obs}_T$.

In a manner similar to OCP, as long as the score function $s(x,y)$ is bounded in the interval $[m,M]$ for any $(x,y)\in\mathcal{X}\times\mathcal{Y}$, it can be shown that the quantity $C$ in (\ref{eq:guarantee_iocp}) is also bounded. Specifically, denote as $p^\text{min}_t$ the lowest value that can be attained by the probability $p_t$. Note that the probability $p_t$ can generally depend on the previous history $(x_1,y_1),\ldots,(x_{t-1}, y_{t-1})$. Then, the threshold $\lambda_t$ is bounded as
\begin{equation}
\small
    \lambda_t \in \bigg[ m-\gamma\frac{\alpha}{\min_{t'=1,...,t-1}{{p}^\text{min}_{t'}}}, M+\gamma\frac{1-\alpha}{\min_{t'=1,...,t-1}{{p}^\text{min}_{t'}}} \bigg]
\end{equation}
as long as one sets $\lambda_1\in[m,M]$. Accordingly, we have

\begin{align}
    C = \frac{M-m}{\gamma} + \frac{1}{\bar{p}_T},
\end{align}
for any 
\begin{equation}
    \bar{p}_T = \min_{t=1,...,T}{p}^\text{min}_{t}.
\end{equation}
This quantity does not depend on $T$ if a strictly positive $\bar{p}$ exists with
\begin{equation}\label{eq:probability_p}
    \bar{p}_T\geq\bar{p}
\end{equation}
for all times $T$.

\section{Online Conformal Probabilistic Linear Solver}\label{sec:method}

In this section, we present OCP-PLS, a novel framework for edge inference via PN that satisfies the long-term quality-of-service requirement (\ref{eq:coverage_guarantee}) by integrating PLS (see in \cref{sub:PLS}) with I-OCP (see \cref{sec:background}).

\subsection{Score Function}
As discussed in \cref{subsec:OCP}, OCP requires the definition of a score function for any value of the target variable. In the setting described in \cref{sec:setting_prob}, the target variable at each time $t$ is the true solution $x^*_t$ in (\ref{eq:linear_system}). Given the posterior distribution (\ref{eq:posterio})-(\ref{eq:mean and variance}) produced by PLS, we set the score for any candidate solution $x_t$ to be proportional to the density of the scaled degenerate multivariate Gaussian distribution $\mathcal{N}(x_t|\mu_t,\Sigma_t)$. Let $\Sigma_t = V_t \Lambda_t V_t^T$ be the eigendecomposition of the rank-$r_t$ covariance matrix $\Sigma_t$, where $V_t$ is a unitary $n_t \times r_t$ matrix and $\Lambda_t$ is an $r_t \times r_t$ diagonal matrix. The score can be explicitly written as
\begin{equation}\label{eq:score_function}
\resizebox{1\columnwidth}{!}{$
s(x_t; A_t, b_t)
= \left\{
\begin{array}{ll}
\exp(-(x_t - \mu_t)^T V_t\Lambda_t^{-1} V_t^T (x_t - \mu_t)) & \text{if } (x_t - \mu_t) \in \text{range}(\Sigma_t), \\[1em]
0 & \text{otherwise}.
\end{array}
\right.
$}
\end{equation}
By doing this, the score function is bounded between $0$ and $1$. 

Since matrix $\Sigma_t$ in (\ref{eq:mean and variance}) can be expressed as a sequence of rank-$1$ updates to the prior diagonal matrix $\Sigma_0$, its eigendecomposition can be efficiently updated using Given's rotations \cite[Section 12.5]{golub2013matrix}. This results in a per-iteration complexity of $\mathcal{O}(n_t^2)$, and an overall complexity of $\mathcal{O}(I_t n_t^2)$ after $I_t$ updates. Note that the score (\ref{eq:score_function}) depends on the specific sequence of search directions (\ref{eq:search_direction}), although we do not make this dependence explicit in the notation $s(x_t; A_t, b_t)$.

Equipped with the scoring function (\ref{eq:score_function}), OCP-PLS produces a prediction set of the form
\begin{equation}\label{eq:prediction_set}
\mathcal{C}_{t} = \{x_t \in \mathbb{R}^d:s(x_t; A_t, b_t) \geq \lambda_t\},
\end{equation}
with a suitably designed threshold $\lambda_t$.

Two observations are in order about the OCP-PLS set (\ref{eq:prediction_set}). First, the PLS set (\ref{eq:credible}) can be seen to take the same form as (\ref{eq:prediction_set}). However, PLS selects the threshold $\lambda_t$ by imposing the constraint in (\ref{eq:credible}), which satisfies the requirement (\ref{eq:coverage_guarantee}) only if the model is well specified. In contrast, as discussed in \cref{subsec:feedback}, OCP-PLS sets the threshold $\lambda_t$ by following the principles of OCP (see \cref{sec:background}).

\subsection{Cloud-to-Edge Feedback and Threshold Selection}\label{subsec:feedback}
As illustrated in \cref{fig:overall} and \cref{fig:time_step}, in order to set the threshold $\lambda_t$ in (\ref{eq:prediction_set}), OCP-PLS leverages sporadic feedback from the cloud to the edge. Specifically,
we propose to submit a query $(A_t, b_t)$ to the cloud with a probability $p_t$ that increases with the \emph{volume radius} of the set $\mathcal{C}_t$, i.e., with $|\mathcal{C}_t|^{1/r_t}$ where
\begin{align}
    |\mathcal{C}_t| = \frac{\pi^{r_t/2}}{\Gamma\big(r_t/2 + 1\big)} \cdot |\Lambda_t|^{1/2} \cdot (-\log\lambda_t)^{r_t/2},
\end{align}
with $\Gamma(\cdot)$ being the Gamma function. In this way, cloud-to-edge feedback is more likely to be requested at time instants $t$ characterized by a larger uncertainty, hence making cloud feedback more valuable for calibration. Conversely, when the HPD set is small enough, we reduce the probability of requesting feedback, thereby saving cloud computing resources. 

While any normalized increasing function of the set size $|\mathcal{C}_t|$ can be used to design $p_t$, we adopt the sigmoid function $\sigma(x) = \left(1+\text{exp}(-x)\right)^{-1}$ and set as
\begin{equation}\label{eq:p_t}
    p_t = \sigma\left(\frac{\log|\mathcal{C}_t|}{r_t}-\theta\right),
\end{equation}
which acts as a soft threshold on the log-volume radius. Lower values of the threshold $\theta$ in (\ref{eq:p_t}) increase the probability of cloud-to-edge feedback, yielding a smaller set $\mathcal{C}_t$ at the cost of higher usage of cloud resources.

Leveraging the feedback from the cloud, OCP-PLS targets the requirement (\ref{eq:coverage_guarantee}) by updating the threshold according to the rule
\begin{equation}\label{eq:update_SF}
\lambda_{t+1} = \lambda_{t} - \frac{\gamma}{p_{t}}( \text{err}_{t} -\alpha)\cdot\text{obs}_{t},
\end{equation}
where the error at round $t$ is defined as
\begin{align}
    \text{err}_t = \mathds{1}\{ x^*_{t} \notin \mathcal{C}_{t} \}.
\end{align}
The rule (\ref{eq:update_SF}) is aligned with the I-OCP update (\ref{eq:update_rule_iocp}). As formalized next, this ensures the validity of the coverage condition (\ref{eq:coverage_guarantee}). The overall OCP-PLS procedure is summarized in \Cref{alg:1}. 
\begin{algorithm}
    \begin{algorithmic}[1]
        \State set hyperparameters $\lambda_1\in[0,1]$, $\gamma$, $\theta$
 \Comment{initialization}
        \For{$t = 1,2,\dots,T$} 
        \State apply PLS with $I_t$ iterations to evaluate $\mu_t$ and $\Sigma_t$ in (\ref{eq:mean and variance})
        \State evaluate the probability $p_t$ in (\ref{eq:p_t})
        \State draw $\text{obs}_t\sim \text{Bern}(p_t)$
        \State evaluate the HPD set $\mathcal{C}_t$ in (\ref{eq:prediction_set})
        \If{$\text{obs}_t = 1$}
            \State obtain $x^*_t$ from the cloud
            \State $\text{err}_t = \mathds{1}\{s(x^*_t;A_t, b_t)< \lambda_t\}$
            \State $\lambda_{t+1} = \lambda_{t} -\frac{\gamma}{p_{t}}(\text{err}_{t}-\alpha)\text{obs}_{t}$ \Comment{update $\lambda_t$}
        \Else
            \State $\lambda_{t+1} = \lambda_{t}$
        \EndIf
        \EndFor
    \end{algorithmic}
    \caption{OCP-PLS}
    \label{alg:1}
\end{algorithm}

\subsection{Theoretical Coverage Guarantee}\label{sec:coverage_theory}
OCP-PLS ensures condition (\ref{eq:coverage_guarantee}) in the following way.
\begin{proposition}\label{proposition}
    Assume that there exists a strictly positive probability $\bar{p}$ satisfying the inequality (\ref{eq:probability_p}) for all times $T$. Then, OCP-PLS satisfies the long-term coverage property
    
    \begin{equation}
        \left|\frac{1}{T} \sum_{t=1}^T \Pr\big[x^*_t \in \mathcal{C}_t\big] -(1-\alpha)\right| \leq \frac{C}{T},
    \end{equation}
    with constant
    \begin{equation}
        C = \frac{1}{\gamma} + \frac{1}{\bar{p}}.
    \end{equation}
\end{proposition}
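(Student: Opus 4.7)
The plan is to observe that OCP-PLS is a direct instantiation of I-OCP (Section \ref{sub:I-OCP}) with the score (\ref{eq:score_function}), and then to specialize the generic I-OCP guarantee (\ref{eq:guarantee_iocp}) to the range of this score. Two structural points need to be verified up front: first, the score $s(x_t;A_t,b_t)=\exp(-(x_t-\mu_t)^\top \Sigma_t^{-1}(x_t-\mu_t))$ lies in $[m,M]=[0,1]$, since $\Sigma_t \succ 0$ forces the quadratic form to be non-negative and hence $s \leq 1$; second, the set (\ref{eq:prediction_set}) is a \emph{super}-level set of $s$, so the update (\ref{eq:update_SF}) carries the opposite sign from the generic I-OCP rule (\ref{eq:update_rule_iocp}). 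Under this consistent sign reversal, a miscoverage event shrinks $\lambda_t$ and enlarges $\mathcal{C}_{t+1}$, while a coverage event grows $\lambda_t$ and shrinks $\mathcal{C}_{t+1}$, recovering the I-OCP feedback dynamics exactly.

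Next I would telescope (\ref{eq:update_SF}) to write $\lambda_{T+1} - \lambda_1 = -\gamma \sum_{t=1}^{T} (\text{obs}_t/p_t)(\text{err}_t - \alpha)$, and take expectations using the filtration $\mathcal{F}_{t-1}$ containing the problem data through round $t$ together with the past Bernoulli outcomes $\text{obs}_1,\ldots,\text{obs}_{t-1}$. In this filtration $\lambda_t,\mathcal{C}_t,p_t,\text{err}_t$ are all measurable, while $\text{obs}_t\mid\mathcal{F}_{t-1}\sim\text{Bern}(p_t)$, so inverse-probability weighting unbiases the increment: $\mathbb{E}[(\text{obs}_t/p_t)(\text{err}_t-\alpha)\mid\mathcal{F}_{t-1}] = \text{err}_t - \alpha$. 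Taking total expectations, substituting $\mathbb{E}[\text{err}_t] = 1 - \Pr[x_t^*\in\mathcal{C}_t]$, and rearranging expresses the left-hand side of \Cref{proposition} exactly as $\mathbb{E}[\lambda_{T+1}-\lambda_1]/(\gamma T)$.

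The remaining step is to bound $|\lambda_{T+1}-\lambda_1|$ independently of $T$. Each increment has magnitude at most $\gamma/\bar p$, since $p_t \geq \bar p$ and $|\text{err}_t-\alpha|\leq 1$. Moreover, whenever $\lambda_t > 1 = M$ the set $\mathcal{C}_t$ is empty, so $\text{err}_t = 1$ and the update is non-positive; symmetrically, whenever $\lambda_t < 0 = m$ one has $\mathcal{C}_t = \mathbb{R}^{n_t}$ and $\text{err}_t = 0$, producing a non-negative update. Combining this monotone self-correction with the per-step overshoot of $\gamma/\bar p$ confines $\lambda_t$ to $[-\gamma\alpha/\bar p,\, 1+\gamma(1-\alpha)/\bar p]$ once $\lambda_1 \in [0,1]$, yielding $|\lambda_{T+1}-\lambda_1| \leq 1+\gamma/\bar p$. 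Dividing by $\gamma T$ then delivers the advertised constant $C = 1/\gamma + 1/\bar p$.

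The only genuine obstacle is bookkeeping: tracking the sign reversal introduced by writing $\mathcal{C}_t$ as a super-level set consistently across the update, the telescoping, and the interval bound. Once that is handled, the argument is a faithful transcription of the I-OCP analysis of \cite{gibbs2021adaptive,zhao2024conformalized} with the score range specialized to $[0,1]$, which is precisely why the constant takes the clean form $1/\gamma + 1/\bar p$.
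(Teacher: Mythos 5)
Your proposal is correct and follows essentially the same route as the paper, which simply invokes the I-OCP guarantee of \cref{sub:I-OCP} after noting that the score (\ref{eq:score_function}) is bounded in $[0,1]$; your write-up just makes explicit the sign reversal for super-level sets, the inverse-probability-weighted telescoping, and the threshold confinement that the paper leaves to the background section. The only slip is that, under the reversed update (\ref{eq:update_SF}), the confinement interval should be $\bigl[-\gamma(1-\alpha)/\bar{p},\, 1+\gamma\alpha/\bar{p}\bigr]$ (you swapped the roles of $\alpha$ and $1-\alpha$), but its width is still $1+\gamma/\bar{p}$, so the constant $C=1/\gamma+1/\bar{p}$ is unaffected.
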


\begin{proof}
    This result follows directly from \cref{sub:I-OCP} by noting that $s(x_t; A_t, b_t)$ in (\ref{eq:score_function}) is bounded in $[0,1]$.
\end{proof}

\section{Experiments}\label{sec:simulation}
This section evaluates the performance of the proposed OCP-PLS scheme via numerical experiments.

\subsection{Setup}
At each round $t$, we randomly and independently generate a Hermitian matrix $A_t$ as $A_t = Q_t\Lambda_t Q_t$, where $n_t\times n_t$ orthogonal matrix $Q_t$ follows the Haar distribution and the diagonal matrix $\Lambda_t$ has diagonal elements drawn from a Gamma distribution with shape parameter $10$, scale parameter $1$, and location parameter $0$. The dimension $n_t$ is drawn i.i.d. and uniformly from the interval $[500, 1000]$. Furthermore, each element of vector $b_t$ is sampled i.i.d. from the standard multivariate Gaussian distribution. We use the prior $x \sim \mathcal{N}(0, I)$, and BayesCG for the search directions \citep{cockayne2019bayesian_PN}.

We set the target miscoverage rate $\alpha$ as $0.1$. The step size $\gamma$ and initial threshold $\lambda_1$ are set as $0.05$ and $0.99$, respectively. For the hyperparameter $\theta$, we use grid search to find a suitable one as $-3.5$. 

We consider two different scenarios in terms of edge processing power availability:
\begin{itemize}
    \item \textbf{Constant edge computing budget}: In this case, the computing budget is set as a fixed fraction of the problem size as $I_t = 0.1n_t$.
    \item \textbf{Time-varying edge computing budget}: In this more challenging scenario, the number of allowed iterations starts at $I_t = 0.1n_t$ for $t = 1$ to $t = 1500$, then it decreases to $I_t = 0.005n_t$ for $t = 1500$ to $t = 3500$, and finally it increases to $I_t = 0.15n_t$ for $t = 3500$ to $t = 5000$.
\end{itemize}

\subsection{Baselines}
We consider the following baselines:
\begin{itemize}
    \item \textbf{PLS}: Conventional PLS directly adopts the $(1-\alpha)$-HPD set as in (\ref{eq:credible}).
    \item \textbf{OCP-PLS with full cloud-to-edge feedback}: This corresponds to the proposed OCP-PLS method with probability $p_t = 1$ of acquiring feedback from cloud to edge.
\end{itemize}

\subsection{Performance Metrics}
As performance measures, we report empirical coverage, empirical HPD set size, and empirical cloud-to-edge feedback rate evaluated on a sequence $\{A_t, b_t\}_{t=1}^T$ of duration $T=5000$. The empirical coverage at round $t$ measures the fraction of times in which the exact solution $x^*_\tau$ was included in the corresponding HPD set $\mathcal{C}_t$ from time $\tau=1$ until time $\tau=t$, i.e.,
\begin{equation}\label{eq:coverage}
    \text{empirical coverage} = \frac{1}{t}\sum_{\tau=1}^t\mathds{1}\{x^*_\tau\in\mathcal{C}_\tau\}.
\end{equation}
The size of the HDP set is measured by the empirical volume radius as
\begin{equation}
    \text{empirical volume radius} = \frac{1}{t}\sum_{\tau=1}^t|\mathcal{C}_\tau|^{1/r_\tau}.
\end{equation}

Finally, the empirical cumulative cloud-to-edge feedback rate is defined as
\begin{align}\label{eq:feedback}
\text{empirical cumulative} & \text{ cloud-to-edge feedback rate} = \sum_{\tau=1}^t \text{obs}_\tau.
\end{align}

We average the probability (\ref{eq:coverage})-(\ref{eq:feedback}) over $10$ independent experiments.

\subsection{Results}

In this subsection, we evaluate coverage, average set size, and cumulative cloud-to-edge feedback rate starting from the case with a constant edge computing budget.

\cref{fig:iid}(a) demonstrates that both OCP-PLS and OCP-PLS with full cloud-to-edge feedback guarantee the long-term coverage condition (\ref{eq:coverage_guarantee}), thereby validating \cref{proposition}. In contrast, confirming prior art \citep{hegde2024calibrated,wenger2020probabilistic}, PLS produces an overly conservative HPD set that always covers the true solution. PLS achieves coverage of $1$ by yielding much larger HPD sets than both OCP-PLS and OCP-PLS with full cloud-to-edge feedback, as shown in\cref{fig:iid}(b). Furthermore, as illustrated in (\ref{eq:coverage_guarantee}), OCP-PLS can reduce the cloud-to-edge feedback rate by approximately $40\%$ as compared to OCP-PLS with full cloud-to-edge feedback, while producing sets with similar sizes.

\begin{figure*}[t]
    \includegraphics[width=350pt]{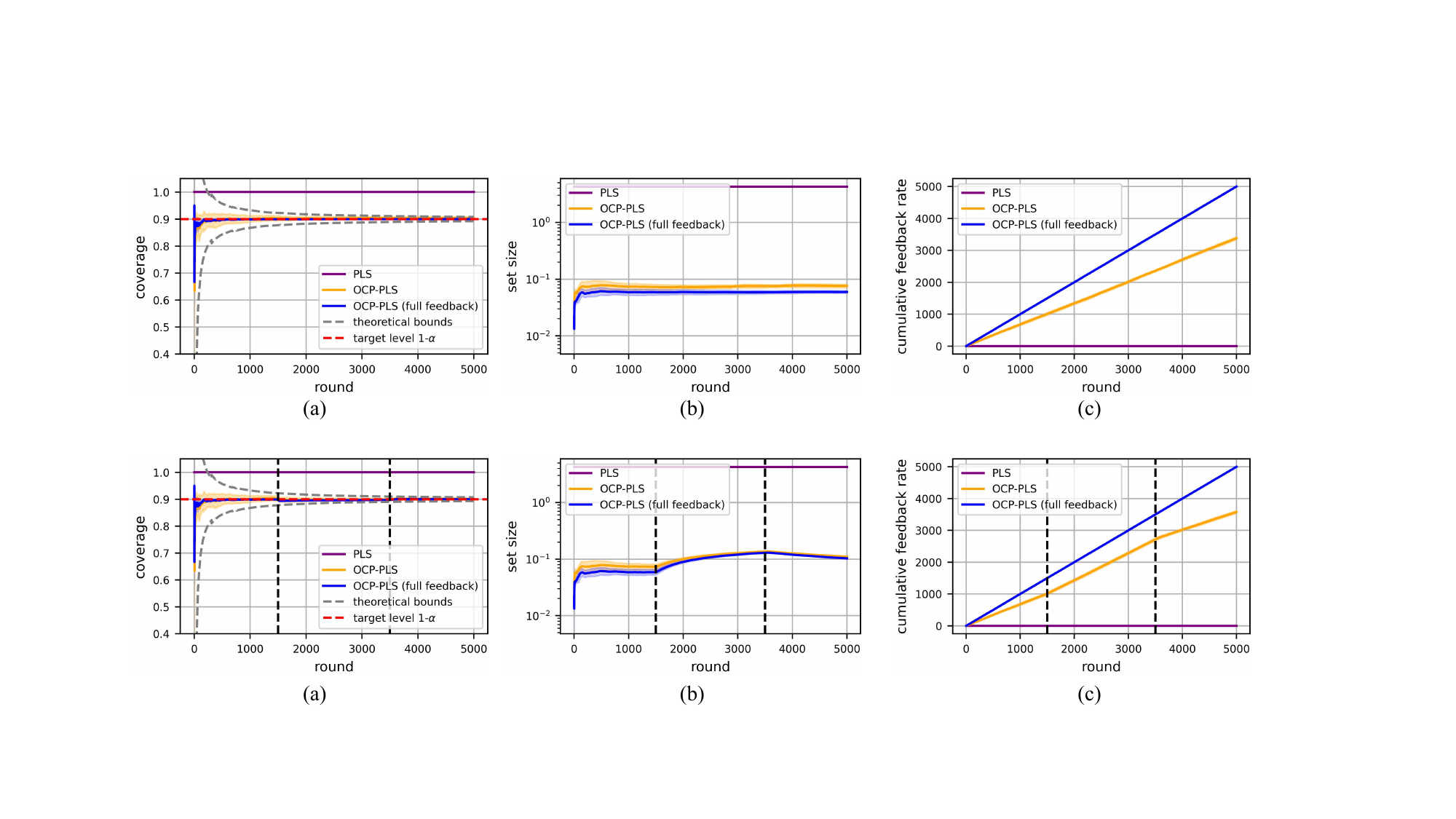}
    \centering
    \caption{Coverage (a), set size (b), and cumulative cloud-to-edge feedback rate (c) for PLS, OCP-PLS, and OCP-PLS with full cloud-to-edge feedback for target coverage level $1-\alpha = 0.9$ (red dashed line in (a)). The gray dashed line in (a) corresponds to the threshold bound (\ref{eq:guarantee_iocp}).}
    \label{fig:iid}
\end{figure*}

\begin{figure*}[t]
    \includegraphics[width=350pt]{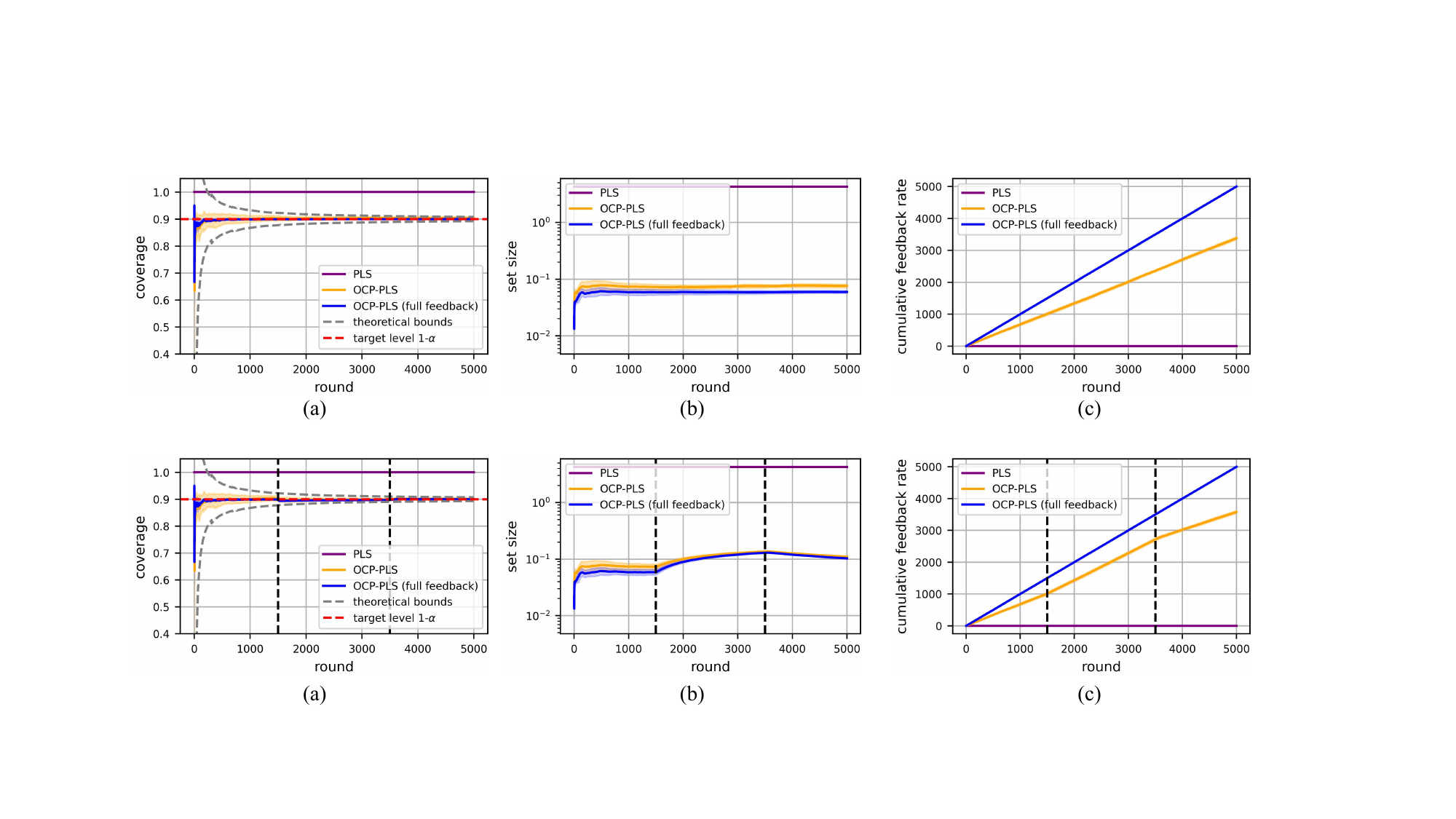}
    \centering
    \caption{Coverage (a), set size (b), and cumulative cloud-to-edge feedback rate (c) for PLS, OCP-PLS, and OCP-PLS with full cloud-to-edge feedback for target coverage level $1-\alpha = 0.9$ (red dashed line in (a)) in a setting with time-varying edge computing budget. The gray dashed line in (a) corresponds to the threshold bound (\ref{eq:guarantee_iocp}).}
    \label{fig:non_iid}
\end{figure*}

Consider now the setting with a time-varying computing budget at the edge. \cref{fig:non_iid} marks the times at which the computing budget changes with vertical dashed lines. \cref{fig:non_iid}(a) verifies that both OCP-PLS and OCP-PLS with full cloud-to-edge feedback maintain the guaranteed long-term coverage condition proved in \cref{proposition} even under time-varying computing budgets. As in the previous example, PLS provides perfect coverage at the cost of excessively large HPD sets. In contrast, as shown in \cref{fig:non_iid}(b), OCP-PLS responds naturally to changes in edge computing budget, as the set sizes increase during periods of reduced computing budget ($t=1500 $ to $t=3500$), while decreasing when the computing budget is higher ($t=3500$ to $t=5000$). Throughout these variations, OCP-PLS consistently maintains a significantly lower cloud-to-edge feedback rate than OCP-PLS with full cloud-to-edge feedback, as shown in \cref{fig:non_iid}(c), adapting the feedback rate to the varying computing budgets. This demonstrates OCP-PLS's ability to balance reliability and communication efficiency under dynamic edge computing budgets.

\section{Conclusion}\label{sec:conclusion}
An important challenge in edge-cloud computing architectures is to provide users with timely and reliable responses under limited computational budgets. This work has made steps towards addressing this challenge by leveraging probabilistic numerics (PN) and online conformal prediction (OCP), an online calibration method previously studied in the context of predictive models. The proposed methodology, referred to as OCP-PLS calibrates the HPD sets produced by probabilistic linear solvers (PLS), ensuring provable coverage guarantees for the true solution. OCP-PLS supports an adaptive mechanism for the use of cloud resources that reduces the communication overhead between edge and cloud, as well as the computational load of the cloud processors. Experiments for tasks requiring the sequential solution of linear systems validate the theoretical reliability, as well as the efficiency, of OCP-PLS. Future work may investigate the generalization of the proposed approach to other numerical problems beyond linear systems \citep{wenger2022posterior,pfortner2022physics,hegde2024calibrated}.

\section*{Appendix A: Extension to Delayed Cloud-to-Edge Feedback}\label{sec: extension}
In this appendix, we extend the analysis of OCP-PLS to a setting with delayed cloud-to-edge feedback. Specifically, we assume that, given a request to the cloud at the $t$-th time instant, the corresponding cloud-to-edge feedback may not arrive before the $(t+1)$-th request from the user. We denote as $d_t$ the corresponding feedback delay, i.e., the feedback arrives right before serving the $(t+1+d_t)$-th time instant and after the $(t+d_t)$-th time instant, where $d_t\geq 1$ is an integer. Note that the case $d_t = 0$ corresponds to the setting considered in \cref{sec:setting_prob}.

A direct extension of the threshold update rule (\ref{eq:update_SF}) is obtained by updating the threshold whenever the edge receives feedback from the cloud. This can be written as 
\begin{equation}\label{eq:update_delay_feedback}
   \lambda_{t+1} = \lambda_t - \gamma\sum_{t'\in \mathcal{T}_t}\frac{\text{obs}_{t'}}{p_{t'}}\big(\mathds{1}\{x^*_{t'} \notin \mathcal{C}_{t'}\}-\alpha\big),
\end{equation}
where the set $\mathcal{T}_t$ is  defined as
\begin{equation}
   \mathcal{T}_t = \{t': t'+d_{t'} = t\}.
\end{equation}

We will now impose mild assumptions on the delays $d_t$ so as to ensure the validity of OCP-PLS. Specifically, we assume the inequality
\begin{equation}\label{eq:order}
    t+d_t \leq t'+d_{t'}
\end{equation}
for all $t<t'$, so that the feedback signals are received in the same order as the requests are sent to the cloud. We also assume that the cardinality of the set $\mathcal{T}_t$ is bounded as
\begin{equation}\label{eq:constrain}
    |\mathcal{T}_t|\leq T^{\max}
\end{equation}
for some finite integer $T^{\max}$.

\begin{proposition} \label{prop:delay}
Assume that there exists a strictly positive probability $\bar{p}$ satisfying the inequality (\ref{eq:probability_p}) for all times $T+T^{\max}$. Then, for any fixed sequence of delays $d_1,d_2,\ldots$, that ensure assumptions (\ref{eq:order})-(\ref{eq:constrain}), the OCP-PLS update rule (\ref{eq:update_delay_feedback}) guarantees the long-term expected coverage property
\begin{equation}\label{eq:assumption3}
   \left|\frac{1}{T}\sum_{t=1}^T\Pr[x^*_t\in\mathcal{C}_t] - (1-\alpha)\right|\leq\frac{C}{T},
\end{equation}
with constant 
   \begin{equation}
       C = T^{\max}-1+\frac{1}{\gamma}+\frac{T^{\max}}{\bar{p}}.
   \end{equation}
\end{proposition}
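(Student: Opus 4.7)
The plan is to mirror the proof of \cref{proposition}, combining a telescoping identity for $\lambda_{T+1}-\lambda_1$ with an adapted bound on the threshold, and additionally to handle the ``pending'' requests whose feedback has not yet arrived by time $T$. First, I would sum the update rule (\ref{eq:update_delay_feedback}) from $t=1$ to $T$; swapping the order of summation and using the definition $\mathcal{T}_t=\{t':t'+d_{t'}=t\}$ turns the double sum into a single sum over requests whose feedback has been received, giving the telescoping identity
\begin{equation*}
\lambda_{T+1}-\lambda_1 \;=\; -\gamma\!\!\sum_{t:\,t+d_t\leq T}\!\frac{\text{obs}_t}{p_t}\bigl(\mathds{1}\{x^*_t\notin\mathcal{C}_t\}-\alpha\bigr).
\end{equation*}
Because $\mathcal{C}_t$ and $p_t$ are measurable with respect to the history before $\text{obs}_t$ is drawn, taking expectations and applying inverse-probability weighting as in \cref{sub:I-OCP} yields
\begin{equation*}
\sum_{t:\,t+d_t\leq T}\!\bigl(\Pr[x^*_t\in\mathcal{C}_t]-(1-\alpha)\bigr) \;=\; \frac{\mathbb{E}[\lambda_{T+1}-\lambda_1]}{\gamma}.
\end{equation*}

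Second, I would bound $|\lambda_{T+1}-\lambda_1|$ and the tail of requests still in flight. Since the score in (\ref{eq:score_function}) lies in $[0,1]$, the self-stabilizing logic survives: $\lambda_t>1$ forces $\mathcal{C}_t=\emptyset$, hence $\text{err}_t=1$, and $\lambda_t<0$ forces $\text{err}_t=0$. The new feature is that one step can now absorb up to $T^{\max}$ feedback terms, each of magnitude at most $\gamma/\bar p$, so a single update can overshoot the interval $[0,1]$ by at most $T^{\max}\gamma/\bar p$ on either side. Extending the induction of \cref{sub:I-OCP} gives $\lambda_t\in[-T^{\max}\gamma/\bar p,\,1+T^{\max}\gamma/\bar p]$ throughout the extended horizon $T+T^{\max}$ covered by (\ref{eq:probability_p}), so $|\lambda_{T+1}-\lambda_1|/\gamma\leq 1/\gamma+T^{\max}/\bar p$. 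For the tail, I would split
\begin{equation*}
\sum_{t=1}^T\!\bigl(\Pr[x^*_t\in\mathcal{C}_t]-(1-\alpha)\bigr)\;=\;\!\!\sum_{t:\,t+d_t\leq T}\!(\cdots)\;+\!\!\sum_{t\leq T:\,t+d_t>T}\!(\cdots),
\end{equation*}
bound each tail term by $1$ in absolute value, and invoke the FIFO property (\ref{eq:order}) together with $|\mathcal{T}_s|\leq T^{\max}$ to show that at most $T^{\max}-1$ requests with $t\leq T$ can still be pending at time $T$. A triangle inequality then yields the advertised constant $C=T^{\max}-1+1/\gamma+T^{\max}/\bar p$, and division by $T$ completes the argument.

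The hard part will be the threshold invariant. Under delays, a batch of feedbacks arriving at step $t$ may refer to \emph{older} requests whose $\lambda_{t'}$ was safely inside $[0,1]$, so that even while $\lambda_t>1$ the batch can push $\lambda$ further upward rather than pulling it back. Ruling out unbounded drift requires pairing each such ``stale'' positive push with a subsequent $\text{err}_{t'}=1$ feedback originating from a time when $\lambda_{t'}>1$; the ordering (\ref{eq:order}) ensures these compensating feedbacks arrive in order, while (\ref{eq:constrain}) caps the size of any single arriving batch, and the extension of (\ref{eq:probability_p}) to time $T+T^{\max}$ guarantees that the self-correction argument can be carried out over the additional $T^{\max}$ steps needed to drain the in-flight queue.
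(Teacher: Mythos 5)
Your decomposition runs in the opposite direction from the paper's, and this is where it breaks. The paper telescopes the update all the way to time $T+d_T+1$, i.e., far enough into the future that the feedback for request $T$ (and, by the FIFO property (\ref{eq:order}), for every earlier request) has already been absorbed into the threshold. The telescoped sum then covers requests $1,\dots,T'$ with $T'=\max\mathcal{T}_{T+d_T}\geq T$, and the paper only has to \emph{subtract} the overshoot terms $t=T+1,\dots,T'$. That overshoot is controlled because every request $t$ with $T\leq t\leq T'$ satisfies $T+d_T\leq t+d_t\leq T'+d_{T'}=T+d_T$, so all of them land in the single batch $\mathcal{T}_{T+d_T}$ and $T'-T+1\leq|\mathcal{T}_{T+d_T}|\leq T^{\max}$. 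You instead telescope only to $T+1$ and must \emph{add back} the in-flight requests $\{t\leq T: t+d_t>T\}$, claiming there are at most $T^{\max}-1$ of them. That claim is false under the stated assumptions: with a constant delay $d_t=D$ for all $t$, every batch is a singleton, so $T^{\max}=1$, yet the requests $T-D+1,\dots,T$ are all still pending at time $T$. Assumptions (\ref{eq:order})--(\ref{eq:constrain}) bound the size of each arriving batch, not the number of batches still in flight, and nothing in the proposition bounds the delays $d_t$ themselves. Your tail term is therefore $\Theta(D)$ rather than $O(T^{\max})$, and the advertised constant $C$ does not follow from your split. The fix is precisely the paper's choice of telescoping horizon.

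Your telescoping identity itself and the threshold bound $|\lambda_{T+1}-\lambda_1|/\gamma\leq 1/\gamma+T^{\max}/\bar p$ are consistent with what the paper uses (the paper states the analogous bound for $\lambda_{T+d_T+1}$ with essentially no justification, so your flagging of the ``stale feedback'' issue for the invariant is a fair observation about a step the paper also leaves informal). But that part cannot rescue the argument: the gap is in the accounting of pending requests, not in the threshold range.
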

Note that \cref{prop:delay} reduces to  \cref{proposition} in \cref{sec:coverage_theory} by setting $T^{\max}=1$ and $d_t=0$, i.e., when there is no delay in the cloud-edge feedback.

\begin{proof}
   Taking the expectation on the both sides of equation (\ref{eq:update_delay_feedback}) and using a telescoping argument for $t=1,...,T+d_T+1$, we get
   \begin{equation}
       \begin{aligned}
   \mathbb{E}[\lambda_{T+d_T+1}] &= \lambda_1 - \gamma\sum_{t'=1}^{T+d_T}\sum_{t\in\mathcal{T}_{t'}}\big(\Pr[x_t^* \notin \mathcal{C}_t] - \alpha  \big),\\
   &=\lambda_1-\gamma\sum_{t=1}^{T'} \big(\Pr[x_t^* \notin \mathcal{C}_t] - \alpha  \big),
   \end{aligned}
   \end{equation}
   
   where $T' = \max \{ t \in \mathcal{T}_{T+d_T} \}$ is the index of the most recent clould-to-edge feedback used to update $\lambda_{T+d_T+1}$.

The expected coverage up to time $T'$ can be obtained as 
    \begin{align} \label{eq:app_1}
       & \frac{1}{T'} \sum_{t=1}^{T'} \Pr[x_t^* \in \mathcal{C}_t] = 1- \alpha - \frac{\mathbb{E}[\lambda_{T+d_T+1}]-\lambda_1}{T' \gamma}.
    \end{align}
Based on (\ref{eq:app_1}), we express the expected coverage up to time $T$ is follows
\begin{equation}
\footnotesize
    \begin{aligned}
        &\frac{1}{T}\sum_{t=1}^T\Pr[x^*_t\in\mathcal{C}_t] = \frac{T'}{T}\frac{1}{T'}\sum_{t=1}^{T'}\Pr[x^*_t\in\mathcal{C}_t] - \frac{1}{T}\sum_{t=T+1}^{T'}\Pr[x^*_t\in\mathcal{C}_t],\\
        & \overset{(a)}{=} \frac{T'}{T}(1-\alpha)-\frac{\mathbb{E}[\lambda_{T+d_T+1}]-\lambda_1}{T\gamma} - \frac{1}{T}\sum_{t=T+1}^{T'}\Pr[x^*_t\in\mathcal{C}_t],\\
        & = (1-\alpha) + \frac{(T'-T)\cdot(1-\alpha)-\sum_{t=T+1}^{T'}\Pr[x^*_t\in\mathcal{C}_t]}{T} \\&- \frac{\mathbb{E}[\lambda_{T+d_T+1}]-\lambda_1}{T\gamma},
    \end{aligned}
\end{equation}
where $(a)$ uses (\ref{eq:app_1}). The coverage gap can be bounded as 
\begin{equation}\label{final_bound}
\scriptsize
    \begin{aligned}
        &\left|\frac{1}{T}\sum_{t=1}^T\Pr[x^*_t\in\mathcal{C}_t] - (1-\alpha)\right|\\
        &\leq \frac{\left|(T'-T)\cdot(1-\alpha)-\sum_{t=T+1}^{T'}\Pr[x^*_t\in\mathcal{C}_t]\right|}{T} + \frac{\left|\mathbb{E}[\lambda_{T+d_T+1}]-\lambda_1\right|}{T\gamma},
    \end{aligned}
\end{equation}
with the first term on the right-hand side appearing due to the presence of a delay in the cloud-edge feedback. From Assumption (\ref{eq:assumption3}), it follows that $T'-T\leq T^{\max}-1$, and therefore we can bound this term as 
\begin{equation}\label{first_bound}
    \left|(T'-T)\cdot(1-\alpha)-\sum_{t=T+1}^{T'}\Pr[x^*_t\in\mathcal{C}_t]\right|\leq T^{\max}-1.
\end{equation}

Using a proof technique similar to the one used without delayed feedback in \cref{sec:background}, given a score function $s(x,y)$ that is bounded in the interval $[0,1]$ for any $(x,y)\in\mathcal{X}\times\mathcal{Y}$, the threshold $\lambda_t$ obtained via (\ref{eq:update_delay_feedback}) satisfies
    \begin{equation}
        \lambda_t\in\bigg[-\gamma\frac{T^{\max}(1-\alpha)}{\min_{t' = 1,\ldots,t-1}p_{t'}^{\text{min}}}, 1+\gamma\frac{T^{\max}\alpha}{\min_{t' = 1,\ldots,t-1}p_{t'}^{\text{min}}}\bigg],
    \end{equation}
    as long as one selects $\lambda_1\in[0,1]$. Accordingly, we have
    \begin{equation}\label{second bound}
    \begin{aligned}
        \frac{\left|\mathbb{E}[\lambda_{T+d_T+1}]-\lambda_1\right|}{\gamma} \leq \frac{1}{\gamma} + \frac{T^{\max}}{\bar{p}_{T+T^{\max}}},
    \end{aligned}
    \end{equation}
    where
    \begin{equation}
    \bar{p}_{T+T^{\max}} = \min_{t'=1,...,T+T^{\max}}{p}^\text{min}_{t}.
    \end{equation}
    This quantity does not depend on $T$ if there exists $\bar{p}>0$ such that
    \begin{equation}\label{eq:probability_p_appendix}
        \bar{p}_{T+T^{\max}}\geq\bar{p}
    \end{equation}
    for all times $T+T^{\max}$.

    Combining (\ref{first_bound}) and (\ref{second bound}), we obtain 
    \begin{equation}
       C = T^{\max}-1+\frac{1}{\gamma}+\frac{T^{\max}}{\bar{p}_{T+T^{\max}}} \leq T^{\max}-1+\frac{1}{\gamma}+\frac{T^{\max}}{\bar{p}},
    \end{equation}
    which concludes the proof. 
\end{proof}

\section*{Acknowledgements}
The work of M. Zecchin and O. Simeone is supported by the European Union’s Horizon Europe project CENTRIC (101096379). The work of O. Simeone is also supported by an Open Fellowship of the EPSRC (EP/W024101/1), and by the EPSRC project (EP/X011852/1).


\bibliography{references}

\end{document}